\newcommand{\cmark}{\ding{51}}%
\newcommand{\xmark}{\ding{55}}%
\newtheorem{proposition}{Proposition}
\title{PairRE: Knowledge Graph Embeddings via Paired Relation Vectors}
\author{Linlin Chao, Jianshan He, Taifeng Wang, Wei Chu \\
    AntGroup \\
  \texttt{\{chulin.cll,yebai.hjs\}@antgroup.com} \\ 
  \texttt{\{taifeng.wang,wei.chu\}@alibaba-inc.com} 
 }
\date{}
\begin{document}
\maketitle
\begin{abstract}
Distance based knowledge graph embedding methods show promising results on link prediction task, on which two topics have been widely studied: 
one is the ability to handle complex relations, such as N-to-1, 1-to-N and N-to-N, 
the other is to encode various relation patterns, such as symmetry/antisymmetry.
However, the existing methods fail to solve these two problems at the same time, which leads to unsatisfactory results.
To mitigate this problem, we propose PairRE, a model with paired vectors for each relation representation. 
The paired vectors enable an adaptive adjustment of the margin in loss function to fit for complex relations. Besides, PairRE is capable of encoding three important relation patterns, symmetry/antisymmetry, inverse and composition.
Given simple constraints on relation representations, PairRE can encode subrelation further.
Experiments on link prediction benchmarks demonstrate the proposed key capabilities of PairRE. Moreover, We set a
new state-of-the-art on two knowledge graph datasets of the challenging Open Graph Benchmark.
\end{abstract}

\section{Introduction}

Knowledge graphs store huge amounts of structured data in the form of triples, with projects such as WordNet \cite{miller1995wordnet}, Freebase  \cite{bollacker2008freebase}, YAGO \cite{suchanek2007yago} and DBpedia \cite{lehmann2015dbpedia}.
They have gained widespread attraction from their successful use in tasks such as question answering \cite{bordes2014open}, semantic parsing \cite{berant2013semantic}, and named entity disambiguation \cite{zheng2012entity} and so on.

Since most knowledge graphs suffer from incompleteness, predicting missing links between entities has been a fundamental problem.
This problem is named as link prediction or knowledge graph completion.
Knowledge graph embedding methods, which embed all entities and relations into a low dimensional space, have been proposed for this problem.

Distance based embedding methods from TransE \cite{bordes2013translating} to the recent state-of-the-art RotatE \cite{sun2019rotate} have shown substantial improvements on knowledge graph completion task. Two major problems have been widely studied.
{The first one refers to handling of 1-to-N, N-to-1, and N-to-N complex relations \cite{bordes2013translating, lin2015learning}}.
In case of the 1-to-N relations, given triples like ($StevenSpielberg$, $DirectorOf$, $?$), distance based models should make all the corresponding entities about film name like $Jaws$ and $JurassicPark$ have closer distance to entity $StevenSpielberg$ after transformation via relation $DirectorOf$. 
The difficulty is that all these entities should have different representations. Same issue happens in cases of N-to-N and N-to-1 relations. 
{The latter is learning and inferring relation patterns according to observed triples, as the success of knowledge graph completion heavily relies on this ability \cite{bordes2013translating, sun2019rotate}}.
There are various types of relation patterns: symmetry (e.g., $IsSimilarTo$), antisymmetry (e.g., $FatherOf$), inverse (e.g., $PeopleBornHere$ and $PlaceOfBirth$), composition (e.g., my mother’s father is my grandpa) and so on. 

Previous methods solve these two problems separately.
TransH \cite{wang2014knowledge}, TransR \cite{lin2015learning}, TransD \cite{ji2015knowledge} all focus on ways to solve complex relations.
However, these methods can only encode symmetry/antisymmetry relations.
The recent state-of-the-art RotatE shows promising results to encode symmetry/antisymmetry, inverse and composition relations.
However, complex relations remain challenging to predict.

Here we present PairRE, an embedding method that is capable of encoding complex relations and multiple relation patterns simultaneously. The proposed model uses two vectors for relation representation. These vectors project the corresponding head and tail entities to Euclidean space, where the distance between the projected vectors is minimized.
This provides three important benefits:
\begin{itemize}
\item[$\bullet$]
The paired relation representations enable an adaptive adjustment of the margin in loss function to fit for different complex relations;
\end{itemize}
\begin{itemize}
\item[$\bullet$]
Semantic connection among relation vectors can be well captured, which enables the model to encode three important relation patterns, symmetry/antisymmetry, inverse and composition;
\end{itemize}
\begin{itemize}
\item[$\bullet$]
Adding simple constraints on relation representations, PairRE can encode subrelation further.
\end{itemize}
Besides, PairRE is a highly efficient model, which contributes to large scale datasets.

We evaluate PairRE on six standard knowledge graph benchmarks. The experiment results show PairRE can achieve either state-of-the-art or highly competitive performance. Further analysis also proves that PairRE can better handle complex relations and encode symmetry/antisymmetry, inverse, composition and subrelation relations.

\section{Background and Notation}

Given a knowledge graph that is represented as a list of fact triples, knowledge graph embedding methods define scoring function to measure the plausibility of these triples.
We denote a triple by $(h, r, t) $, where $h $ represents head entity, $r $ represents relation and $t $ represents tail entity.
The column vectors of entities and relations are represented by bold lower case letters, which belong to set $\mathcal{E} $ and $\mathcal{R} $ respectively. We denote the set of all triples that are true in a world as $\mathcal{T} $. $f_r(h, t)$ represents the scoring function.

We take the definition of complex relations from \cite{wang2014knowledge}.
For each relation $r$, we compute average number of tails per head (tphr) and average number of heads per tail (hptr). If tphr $<$ 1.5 and hptr $<$ 1.5, $r$ is treated as 1-to-1; if tphr $>$ 1.5 and hptr $>$ 1.5, $r$ is treated as a N-to-N; if tphr $>$ 1.5 and hptr $<$ 1.5, $r$ is treated as 1-to-N.

We focus on four important relation patterns, which includes: (1) \textbf{Symmetry/antisymmetry}. A relation $r $ is symmetric
if $\forall e_1, e_2 \in \mathcal{E}, (e_1, r, e_2) \in \mathcal{T} \iff
 (e_2, r, e_1) \in \mathcal{T}$ and is antisymmetric
 if $(e_1, r, e_2) \in \mathcal{T} \Rightarrow (e_2, r, e_1) \notin
 \mathcal{T}$; (2) \textbf{Inverse}. If $\forall e_1, e_2 \in \mathcal{E}, (e_1, r_1, e_2) \in \mathcal{T} \iff (e_2, r_2, e_1) \in \mathcal{T}$, then $r_1 $
and $r_2 $ are inverse relations; (3) \textbf{Composition}.
If $\forall e_1, e_2, e_3 \in \mathcal{E}, (e_1, r_1, e_2) \in \mathcal{T} \land (e_2, r_2, e_3) \in \mathcal{T} \Rightarrow (e_1, r_3, e_3) \in \mathcal{T}$,
then $r_3 $ can be seen as the composition of $r_1$ and $r_2 $;
(4) \textbf{Subrelation} \cite{qu2019probabilistic}.
If $\forall e_1, e_2 \in \mathcal{E}, (e_1, r_1, e_2) \in \mathcal{T}  \Rightarrow (e_1, r_2, e_2) \in \mathcal{T}$,
then $r_2 $ can be seen as a subrelation of $r_1$.

\begin{table*}[h]
\begin{center}
\resizebox{0.7\textwidth}{!}{
\begin{tabular}{c|c|c|ccccc}
\hline
\multirow{2}{*}{Method} &\multirow{2}{*}{Score Function}
&\multirow{2}{*}{\begin{tabular}[c]{@{}c@{}}Performance of\\ complex relations\end{tabular}}
&\multicolumn{5}{c}{Relation Patterns} \\
&          &     & $Sym$     & $Asym$     & $Inv$     & $Comp$   &$Sub$ \\ \hline
TransE  &$-||\bm{h} + \bm{r} - \bm{t}||$  &Low  &\xmark  &\cmark   &\cmark  &\cmark  &\xmark \\
TransR  &$-||\bm{M}_{r}\bm{h} + \bm{r} - \bm{M}_{r}\bm{t}||$  &High &\xmark  &\cmark   &\xmark  &\xmark &\xmark\\
RotatE &$-||\bm{h}\circ\bm{r} - \bm{t}||$ &Low  &\cmark  &\cmark   &\cmark  &\cmark  &\xmark   \\ \hline
PairRE  &$-||\bm{h}\circ\bm{r}^H - \bm{t}\circ\bm{r}^T||$ &High   &\cmark  &\cmark   &\cmark  &\cmark &{\cmark}* \\ \hline
\end{tabular}
}
\end{center}
\caption{\label{table:Comparision} Comparison between PairRE and some distance based embedding methods.
$Sym$, $Asym$, $Inv$, $Comp$ and $Sub$ are abbreviations for symmetry, antisymmetry, inverse and subrelation respectively.
{\cmark}* means the model can have the specific capability with some constraints.}
\end{table*}

\section{Related Work}
\textbf{Distance based models}.
Distance based models measure plausibility of fact triples as distance between entities.
TransE interprets relation as a translation vector $r $ so that entities can be connected, i.e., $h + r \approx t$.
TransE is efficient, though cannot model symmetry relations and have difficulty in modeling complex relations.
Several models are proposed for improving TransE to deal with complex relations, including TransH, TransR, TransD, TranSparse \cite{ji2016knowledge} and so on.
All these methods project the entities to relation specific hyperplanes or spaces first, then translate projected entities with relation vectors.
By projecting entities to different spaces or hyperplanes, the ability to handle complex relations is improved.
However, with the added projection parameters, these models are unable to encode inverse and composition relations.

The recent state-of-the-art, RotatE, which can encode symmetry/antisymmetry, inverse and composition relation patterns, utilizes rotation based translational method in a complex space. Although expressiveness for different relation patterns, complex relations remain challenging. 
GC-OTE \cite{tang2019orthogonal} proposes to improve complex relation modeling ability of RotatE by introducing graph context to entity embedding.  
However, the calculation of graph contexts for head and tail entities is time consuming, which is inefficient for large scale knowledge graphs, e.g. ogbl-wikikg \cite{hu2020open}.

Another related work is SE \cite{bordes2011learning}, which utilizes two separate relation matrices to project head and tail entities.
As pointed out by \cite{sun2019rotate}, this model is not able to encode symmetry/antisymmetry, inverse and composition relations.

Table \ref{table:Comparision} shows comparison between our method and some representative distance based methods.
As the table shows, our model is the most expressive one, with the ability to handle complex relations and encode four key relation patterns. 

\textbf{Semantic matching models}.
Semantic matching models exploit similarity-based scoring functions, which can be divided into bilinear models and neural network based models.
As the models have been developed, such as RESCAL \cite{nickel2011three}, DistMult \cite{yang2014embedding}, HolE \cite{nickel2016holographic}, ComplEx \cite{trouillon2016complex} and  QuatE \cite{zhang2019quaternion}, the key relation encoding abilities are enriched.  However, all these models have the flaw in encoding composition relations \cite{sun2019rotate}.

RESCAL, ComplEx and SimplE \cite{kazemi2018simple} are all proved to be fully expressive when embedding dimensions fulfill some requirements \cite{wang2018multi, trouillon2016complex, kazemi2018simple}.
The fully expressiveness means these models can express all the ground truth which exists in the data, including complex relations.
However, these requirements are hardly fulfilled in practical use.
It is proved by \cite{wang2018multi} that, to achieve complete expressiveness, the embedding dimension should be greater than $N $/32, where $N $ is the number of entities in dataset.

Neural networks based methods, e.g., convolution neural networks \cite{dettmers2018convolutional}, graph convolutional networks \cite{schlichtkrull2018modeling}  show promising performances.
However, they are difficult to analyze as they work as a black box.

\textbf{Encoding Subrelation}.
Existing methods encode subrelation by utilizing first order logic rules.
One way is to augment knowledge graphs with grounding of rules, including subrelation rules \cite{guo2018knowledge, qu2019probabilistic}.
The other way is adding constraints on entity and relation representations, e.g., ComplEx-NNE-AER and SimplE$^+$.
The second way enriches the models' expressiveness with relatively low cost. In this paper, we show that PairRE can  encode subrelation with constraints on relation representations while keeping the ability to encode symmetry/antisymmetry, inverse and composition relations.

\begin{figure*}[h]
   \begin{subfigure}{.33\textwidth}
    \centering
    \includegraphics[width=0.6\linewidth]{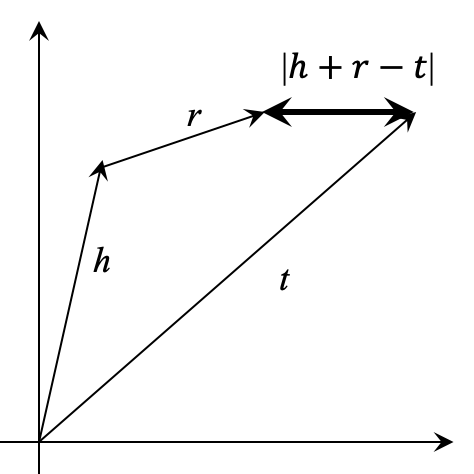}
    \caption{TransE}
    \label{fig:transe}
  \end{subfigure}
  \begin{subfigure}{.33\textwidth}
    \centering
    \includegraphics[width=0.6\linewidth]{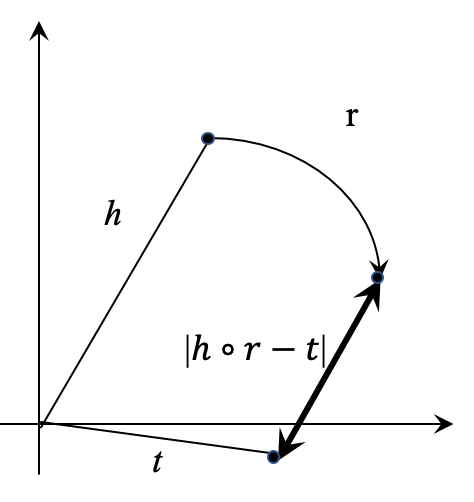}
    \caption{RotatE}
    \label{fig:rotate}
  \end{subfigure}
  \begin{subfigure}{.33\textwidth}
    \centering
    \includegraphics[width=0.65\linewidth]{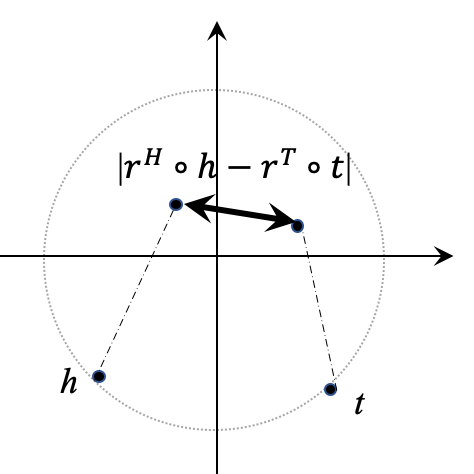}
    \caption{PairRE}
    \label{fig:pairre}
  \end{subfigure}
\caption{Illustration of TransE, RotatE and PairRE when the entities stay in a plane.
For PairRE, all entities are on the unit circle.
The relation vectors project entities to different locations.
}
\label{fig:Illustration}
\end{figure*}

\section{Methodology}\label{Method}

To overcome the problem of modeling 1-to-N/N-to-1/N-to-N complex relations and enrich the capabilities for different relation patterns, we propose a model with paired vectors for each relation.
Given a training triple ($h $, $r $, $t $), our model learns vector embeddings of entities and relation in real space.
Specially, PairRE takes relation embedding as paired vectors, which is represented as $[\bm{r}^H, \bm{r}^T]$.
$\bm{r}^H$ and $\bm{r}^T$ project head entity $h$ and tail entity $t$ to Euclidean space respectively.
The projection operation is the Hadamard product\footnote{Hadamard product means entry-wise product.} between these two vectors.
PairRE then computes distance of the two projected vectors as plausibility of the triple .
We want that $\bm{h} \circ \bm{r}^H \approx \bm{t} \circ \bm{r}^T $ when ($h, r, t$) holds, while $\bm{h} \circ \bm{r}^H$ should be far away from $\bm{t} \circ \bm{r}^T$ otherwise.
In this paper, we take the $L_1$-norm to measure the distance.

In order to remove scaling freedoms, we also add constraint on embeddings similar to previous distance based models \cite{bordes2013translating,wang2014knowledge, lin2015learning}. And the constraint is only added on entity embeddings. We want relation embeddings to capture semantic connection among relation vectors (e.g., $PeopleBornHere$ and $PlaceOfBirth$) and complex characteristic (e.g., 1-N) easily and sufficiently. For entity embedding, the $L_2$-norm is set to be 1.

The scoring function is defined as follows:
\begin{equation}
f_r(\bm{h}, \bm{t}) = -{||\bm{h}\circ\bm{r}^H - \bm{t}\circ\bm{r}^T||},
\end{equation}
where $\bm{h}, \bm{r}^H, \bm{r}^T, \bm{t} \in \mathbb{R}^d $ and ${||\bm{h}||}^2 = {||\bm{t}||}^2 = 1$.
The model parameters are, all the entities’ embeddings, $\{\bm{e}_j\}_{j=1}^{\mathcal{E}} $ and all the relations’ embeddings,
$\{\bm{r}_j\}_{j=1}^{\mathcal{R}} $.

Illustration of the proposed PairRE is shown in Figure~\ref{fig:Illustration}.
Compared to TransE/RotatE, PairRE enables an entity to have distributed representations when involved in different relations. We also find the paired relation vectors enable an adaptive adjustment of the margin in loss function, which alleviates the modeling problem for complex relations. 

Let's take a 1-to-N relation as an example. We set the embedding dimension to one and remove the constraint on entity embeddings for better illustration.
Given triples $(h, r, ?)$, where the correct tail entities belong to set $S = \{t_1, t_2, ..., t_N\}$, PairRE predicts tail entities by letting 
\begin{equation}
||\bm{h} \circ \bm{r}^H - \bm{t}_i \circ \bm{r}^T|| < \gamma,
\nonumber
\end{equation}
where $\gamma$ is a fixed margin for distance based embedding models and $t_i \in S$.
The value of $\bm{t}_i$ should stay in the following range: 
\begin{equation}
\resizebox{.997\hsize}{!}{
$
\bm{t}_i\in 
\begin{cases}
((\bm{h}\circ\bm{r}^H-\gamma)/\bm{r}^T, (\bm{h}\circ\bm{r}^H+\gamma)/\bm{r}^T),\text{if } \bm{r}^T>0, \\ 
((\bm{h}\circ\bm{r}^H+\gamma)/\bm{r}^T, (\bm{h}\circ\bm{r}^H-\gamma)/\bm{r}^T),\text{if } \bm{r}^T<0, \\
(-\infty, +\infty), \text{otherwise}.
\end{cases}
\nonumber
$}
\end{equation}
The above analysis shows PairRE can adjust the value of $\bm{r}^T$ to fit the entities in  $S$. The larger the size of $S$, the smaller the absolute value $\bm{r}^T$. While models like TransE or RotatE have a fixed margin for all complex relation types. When the size of $S$ is large enough, these models will be difficult to fit the data. For N-to-1 relations, PairRE can also adjust the value of $\bm{r}^H$ adaptively to fit the data.

Meanwhile, not adding a relation specific translational vector enables the model to encode several key relation patterns. We show these capabilities below.

\begin{proposition}
PairRE can encode symmetry/antisymmetry relation pattern.
\end{proposition}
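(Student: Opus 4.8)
The plan is to follow the standard template used for distance-based models: adopt the convention that the learned embeddings fit the data, so that $(h,r,t)\in\mathcal{T}$ exactly when $f_r(\bm{h},\bm{t})=0$, i.e.\ $\bm{h}\circ\bm{r}^h=\bm{t}\circ\bm{r}^t$, and then translate each pattern into an algebraic condition on the paired vectors $\bm{r}^h,\bm{r}^t$ and exhibit embeddings meeting it. Concretely I would prove the two halves of the claim: (i) there exist relation vectors under which $r$ is symmetric, and (ii) there exist relation vectors under which $r$ is antisymmetric; along the way I would also record the exact constraint characterizing each case, since that is what the later propositions and the subrelation discussion rely on.

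For symmetry, the slick direction is sufficiency: if $\bm{r}^h=\bm{r}^t$ then $f_r(\bm{h},\bm{t})=-\|(\bm{h}-\bm{t})\circ\bm{r}^h\|^2$, which is manifestly invariant under swapping $\bm{h}$ and $\bm{t}$, so $(e_1,r,e_2)\in\mathcal{T}\iff(e_2,r,e_1)\in\mathcal{T}$. More generally I would show $f_r(\bm{h},\bm{t})=f_r(\bm{t},\bm{h})$ for all entities whenever $|r^h_i|=|r^t_i|$ for every coordinate $i$: expanding $\|\bm{h}\circ\bm{r}^h-\bm{t}\circ\bm{r}^t\|^2$, the cross term $\sum_i h_i r^h_i\, t_i r^t_i$ is already symmetric in $\bm{h},\bm{t}$, and under $(r^h_i)^2=(r^t_i)^2$ the square terms satisfy $\|\bm{h}\circ\bm{r}^h\|^2+\|\bm{t}\circ\bm{r}^t\|^2=\sum_i(h_i^2+t_i^2)(r^h_i)^2=\|\bm{t}\circ\bm{r}^h\|^2+\|\bm{h}\circ\bm{r}^t\|^2$. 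For the converse I would assume $r$ symmetric, so for some $e_1,e_2$ both $\bm{e}_1\circ\bm{r}^h=\bm{e}_2\circ\bm{r}^t$ and $\bm{e}_2\circ\bm{r}^h=\bm{e}_1\circ\bm{r}^t$ hold; taking the entry-wise product of these two identities and cancelling $e_{1,i}e_{2,i}$ at the coordinates where it is nonzero yields $(r^h_i)^2=(r^t_i)^2$ there.

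For antisymmetry I would argue contrapositively. Suppose $(e_1,r,e_2)\in\mathcal{T}$, so $\bm{e}_1\circ\bm{r}^h=\bm{e}_2\circ\bm{r}^t$, and that $|r^h_j|\neq|r^t_j|$ for some coordinate $j$ with $e_{1,j},e_{2,j}\neq 0$. If $(e_2,r,e_1)$ were also in $\mathcal{T}$ we would have $\bm{e}_2\circ\bm{r}^h=\bm{e}_1\circ\bm{r}^t$; multiplying this identity with the previous one entry-wise and cancelling the nonzero factor $e_{1,j}e_{2,j}$ forces $(r^h_j)^2=(r^t_j)^2$, a contradiction, so $(e_2,r,e_1)\notin\mathcal{T}$. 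To turn this into an existence proof I would build a small explicit model, e.g.\ in dimension two: pick unit vectors $\bm{e}_1=(\cos\alpha,\sin\alpha)$, $\bm{e}_2=(\cos\beta,\sin\beta)$ with $\alpha,\beta$ away from the axes and $|\cos\alpha|\neq|\cos\beta|$, fix $\bm{r}^h$ arbitrarily with nonzero entries, and set $r^t_i=e_{1,i}r^h_i/e_{2,i}$; then $(e_1,r,e_2)$ holds exactly, while $|r^t_i/r^h_i|=|e_{1,i}/e_{2,i}|\neq 1$, so the hypothesis of the contrapositive argument is satisfied at both coordinates and $r$ is genuinely antisymmetric.

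The routine content is just the two Hadamard-product expansions; the one place that needs care, and the point I would state up front, is the cancellation of entity coordinates, since a priori an entity on the unit sphere may have zero entries. For the symmetry characterization this only weakens the ``necessity'' conclusion to the coordinates where some true pair has nonzero entries, which is harmless; for the antisymmetry construction I would simply choose $\bm{e}_1,\bm{e}_2$ with all coordinates nonzero so the obstruction coordinate survives. The other thing to flag explicitly is the exact-fit convention $(h,r,t)\in\mathcal{T}\iff f_r(\bm{h},\bm{t})=0$, on which the whole derivation rests.
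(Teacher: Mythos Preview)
Your proposal is correct and follows the same core approach as the paper: identify the exact-fit convention $\bm{h}\circ\bm{r}^h=\bm{t}\circ\bm{r}^t$, then derive the coordinate-wise condition $(r^h_i)^2=(r^t_i)^2$ for symmetry (and its negation for antisymmetry) by taking the entry-wise product of the two identities. You are in fact considerably more careful than the paper's own appendix, which only sketches the necessity direction and omits the sufficiency argument, the explicit antisymmetric construction, and the zero-coordinate caveat that you rightly flag.
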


\begin{proof}
If $ (e_1, r_1, e_2) \in \mathcal{T}$ and $ (e_2, r_1, e_1) \in \mathcal{T}$, we have
\begin{equation}
\begin{aligned}
\bm{e}_1 \circ \bm{r}_1^H = \bm{e}_2 \circ \bm{r}_1^T \land \bm{e}_2 \circ \bm{r}_1^H = \bm{e}_1 \circ \bm{r}_1^T \\ \Rightarrow {\bm{r}_1^H}^2 = {\bm{r}_1^T}^2
  \end{aligned}
\end{equation}
if $ (e_1, r_1, e_2) \in \mathcal{T}$ and $ (e_2, r_1, e_1) \notin \mathcal{T}$, we have
\begin{equation}
\begin{aligned}
\bm{e}_1 \circ \bm{r}_1^H = \bm{e}_2 \circ \bm{r}_1^T \land \bm{e}_2 \circ \bm{r}_1^H \neq \bm{e}_1 \circ \bm{r}_1^T \\ \Rightarrow {\bm{r}_1^H}^2 \neq {\bm{r}_1^T}^2
\end{aligned}
\end{equation}
\end{proof}

\begin{proposition}
PairRE can encode inverse relation pattern.
\end{proposition}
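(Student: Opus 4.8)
The plan is to mirror the proof schema used for Proposition~1: we say PairRE \emph{encodes} a pattern when there exist relation embeddings under which the maximum of the score, $f_r(\bm{h},\bm{t})=0$, is attained exactly on the triples the pattern prescribes. Since $f_r(\bm{h},\bm{t})=-\|\bm{h}\circ\bm{r}^h-\bm{t}\circ\bm{r}^t\|^2\le 0$ with equality iff $\bm{h}\circ\bm{r}^h=\bm{t}\circ\bm{r}^t$, the statement ``$(e_1,r_1,e_2)\in\mathcal{T}$'' becomes the identity $\bm{h}\circ\bm{r}_1^h=\bm{t}\circ\bm{r}_1^t$, and ``$(e_2,r_2,e_1)\in\mathcal{T}$'' becomes $\bm{t}\circ\bm{r}_2^h=\bm{h}\circ\bm{r}_2^t$, where $\bm{h},\bm{t}$ are the unit-norm embeddings of $e_1,e_2$. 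So the first step is to reduce the proposition to: exhibit $\bm{r}_1^h,\bm{r}_1^t,\bm{r}_2^h,\bm{r}_2^t\in\mathbb{R}^d$ for which these two identities are equivalent for \emph{every} pair of unit vectors $\bm{h},\bm{t}$.

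The second step is to read off the constraint on the four relation vectors that forces this equivalence. Assuming the relation vectors have nonzero entries, solve the first identity entry-wise for $\bm{h}$, namely $\bm{h}=\bm{t}\circ\bm{r}_1^t/\bm{r}_1^h$ (entry-wise division, as in the notation of Figure~\ref{fig:comp}), substitute into $\bm{t}\circ\bm{r}_2^h=\bm{h}\circ\bm{r}_2^t$, and cancel the common factor $\bm{t}$; the two identities describe the same locus of $(\bm{h},\bm{t})$ precisely when
\[
\bm{r}_1^h\circ\bm{r}_2^h=\bm{r}_1^t\circ\bm{r}_2^t .
\]
Conversely, combining the first identity with this relation immediately yields the second, and symmetrically, so under it the two identities coincide; hence PairRE represents $r_1,r_2$ as inverse relations. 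Finally, the condition $\bm{r}_1^h\circ\bm{r}_2^h=\bm{r}_1^t\circ\bm{r}_2^t$ is plainly satisfiable — e.g.\ by $\bm{r}_2^h=\bm{r}_1^t$, $\bm{r}_2^t=\bm{r}_1^h$, and with ample freedom left over, which is what later lets it coexist with the symmetry/antisymmetry and composition constraints — so the proposition follows.

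The one delicate point, and the step I would be most careful about, is the entry-wise division: it requires the relation vectors to have no zero coordinates, so this should be stated as a hypothesis on the embeddings rather than swept under the rug; one must also note that the equivalence is demanded for \emph{all} entity embeddings, so the derived condition must involve the relation vectors alone and not the particular $\bm{h},\bm{t}$ in the data. (The unit-norm constraint $\|\bm{h}\|=\|\bm{t}\|=1$, which is decisive in the symmetry argument, is irrelevant here and can be ignored.) A division-free write-up is possible — multiply $\bm{h}\circ\bm{r}_1^h=\bm{t}\circ\bm{r}_1^t$ through by $\bm{r}_2^h$, rewrite the left side via the boxed constraint, and cancel $\bm{r}_1^t$ — but that still needs nonzero entries in $\bm{r}_1^t$, so the caveat is unavoidable.
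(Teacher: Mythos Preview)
Your proposal is correct and follows essentially the same route as the paper: both arguments reduce the inverse pattern to the entry-wise constraint $\bm{r}_1^h\circ\bm{r}_2^h=\bm{r}_1^t\circ\bm{r}_2^t$ on the paired relation vectors. Your write-up is in fact more careful than the paper's one-line derivation, since you explicitly argue the equivalence in both directions, exhibit a satisfying assignment, and flag the nonzero-entry hypothesis needed for the entry-wise division.
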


\begin{proof}
If $ (e_1, r_1, e_2) \in \mathcal{T}$ and $ (e_2, r_2, e_1) \in \mathcal{T}$, we have
\begin{equation}
\begin{aligned}
\bm{e}_1 \circ \bm{r}_1^H = \bm{e}_2 \circ \bm{r}_1^T \land \bm{e}_2 \circ \bm{r}_2^H = \bm{e}_1 \circ \bm{r}_2^T \\ \Rightarrow {\bm{r}_1^H} \circ\ {\bm{r}_2^H}  = {\bm{r}_1^T} \circ {\bm{r}_2^T}
\end{aligned}
\end{equation}
\end{proof}

\begin{proposition}
PairRE can encode composition relation pattern.
\end{proposition}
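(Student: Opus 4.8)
The plan is to reduce the composition pattern to an algebraic identity among the paired relation vectors and then exhibit an explicit family of embeddings for $r_3$ that realizes it. The starting observation is that the score $f_r(\bm{h},\bm{t})=-||\bm{h}\circ\bm{r}^h-\bm{t}\circ\bm{r}^t||^2$ attains its maximal value $0$ exactly when $\bm{h}\circ\bm{r}^h=\bm{t}\circ\bm{r}^t$, so a triple $(h,r,t)$ is represented as ``true'' precisely by this vector equation. Thus the composition premise $(e_1,r_1,e_2)\in\mathcal{T}\wedge(e_2,r_2,e_3)\in\mathcal{T}$ translates to the two equations $\bm{e}_1\circ\bm{r}_1^h=\bm{e}_2\circ\bm{r}_1^t$ and $\bm{e}_2\circ\bm{r}_2^h=\bm{e}_3\circ\bm{r}_2^t$, while the desired conclusion $(e_1,r_3,e_3)\in\mathcal{T}$ is $\bm{e}_1\circ\bm{r}_3^h=\bm{e}_3\circ\bm{r}_3^t$.

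Next I would eliminate the intermediate entity $\bm{e}_2$ by combining the two premise equations, using the commutativity of the Hadamard product and, crucially, multiplying rather than dividing (so that no assumption of nonzero relation coordinates is needed). Multiplying the first equation coordinatewise by $\bm{r}_2^h$ and the second by $\bm{r}_1^t$ gives $\bm{e}_1\circ\bm{r}_1^h\circ\bm{r}_2^h=\bm{e}_2\circ\bm{r}_1^t\circ\bm{r}_2^h$ and $\bm{e}_2\circ\bm{r}_1^t\circ\bm{r}_2^h=\bm{e}_3\circ\bm{r}_1^t\circ\bm{r}_2^t$; chaining them yields $\bm{e}_1\circ(\bm{r}_1^h\circ\bm{r}_2^h)=\bm{e}_3\circ(\bm{r}_1^t\circ\bm{r}_2^t)$. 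Hence the single choice $\bm{r}_3^h=\bm{r}_1^h\circ\bm{r}_2^h$ and $\bm{r}_3^t=\bm{r}_1^t\circ\bm{r}_2^t$ makes the conclusion follow from the premises for every entity triple $(e_1,e_2,e_3)$ on the unit sphere, which is exactly what it means for PairRE to encode composition. More generally, any $[\bm{r}_3^h,\bm{r}_3^t]$ satisfying $\bm{r}_3^h\circ\bm{r}_1^t\circ\bm{r}_2^t=\bm{r}_3^t\circ\bm{r}_1^h\circ\bm{r}_2^h$ works, so there is a whole family of valid embeddings; since the analogous statements for inverse and symmetry take the same form, I would present the three propositions with a uniform derivation.

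The one point that needs care, and the main obstacle to making the statement fully rigorous, is the interaction with the entity-norm constraint $||\bm{e}||^2=1$: one must check that the construction does not implicitly force the relation vectors into a degenerate configuration or prevent $r_3$ from also satisfying whatever other pattern it is meant to. Here this is harmless because we only construct $\bm{r}_3$ while $\bm{e}_1,\bm{e}_2,\bm{e}_3$ remain arbitrary given unit vectors, so normalization never enters the derivation above; I would add a one-line remark to that effect. Finally I would note, as for the other relation-pattern propositions, that ``encoding'' means existence of parameters realizing the pattern rather than all parameter choices doing so, hence exhibiting the explicit construction suffices to complete the proof.
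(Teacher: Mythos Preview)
Your proposal is correct and follows essentially the same route as the paper: translate the three triples into the Hadamard-product equalities, eliminate the intermediate entity, and arrive at the relation-level identity $\bm{r}_1^t\circ\bm{r}_2^t\circ\bm{r}_3^h=\bm{r}_1^h\circ\bm{r}_2^h\circ\bm{r}_3^t$, of which your explicit choice $\bm{r}_3^h=\bm{r}_1^h\circ\bm{r}_2^h$, $\bm{r}_3^t=\bm{r}_1^t\circ\bm{r}_2^t$ is a particular solution. Your write-up is in fact more careful than the paper's (which just states the implication without the elimination step or the remarks on division-free manipulation and the norm constraint), but the underlying argument is identical.
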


\begin{proof}
If $ (e_1, r_1, e_2) \in \mathcal{T}$, $ (e_2, r_2, e_3) \in \mathcal{T}$ and $(e_1, r_3, e_3) \in \mathcal{T} $, we have
\begin{equation}
  \begin{aligned}
 \bm{e}_1 \circ \bm{r}_1^H = \bm{e_2} \circ \bm{r}_1^T  \land  \bm{e}_2 \circ \bm{r}_2^H = \bm{e_3} \circ \bm{r}_2^T  \land  &\\ \bm{e}_1 \circ \bm{r}_3^H = \bm{e_3} \circ \bm{r}_3^T
 \\ \Rightarrow \bm{r}_1^T \circ \bm{r}_2^T \circ \bm{r}_3^H = \bm{r}_1^H \circ  \bm{r}_2^H  \circ  \bm{r}_3^T
  \end{aligned}
\end{equation}
\end{proof}

Moreover, with some constraint, PairRE can also encode subrelations. For a subrelation pair, $\forall h, t \in \mathcal{E}$ : $(h, r_1, t) \rightarrow (h, r_2, t)$, it suggests triple $(h, r_2, t)$ should be always more plausible than triple $(h, r_1, t)$. 
In order to encode this pattern, PairRE should have the capability to enforce $f_{r_2}(h, r_2, t) \geq f_{r_1}(h, r_1, t)$.

\begin{proposition}
PairRE can encode subrelation relation pattern using inequality constraint.
\end{proposition}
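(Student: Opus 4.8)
\emph{Proof proposal.} The first thing to pin down is what ``encoding the subrelation pattern'' means for a \emph{distance}-based scorer. The plan is to read it as follows: I want a family of admissible relation embeddings for which $f_{r_2}(\bm{h},\bm{t}) \ge f_{r_1}(\bm{h},\bm{t})$ holds for \emph{every} pair of unit-norm entity vectors $\bm{h},\bm{t}$. Since the score is always $\le 0$, this means that whenever a true triple $(e_1,r_1,e_2)\in\mathcal{T}$ is placed at the maximal score $f_{r_1}(\bm{e}_1,\bm{e}_2)=0$ (equivalently $\bm{e}_1\circ\bm{r}_1^h=\bm{e}_2\circ\bm{r}_1^t$), the inequality forces $f_{r_2}(\bm{e}_1,\bm{e}_2)=0$ as well, i.e. $(e_1,r_2,e_2)$ is predicted true; this is exactly the implication in the definition of subrelation. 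So the concrete goal is to exhibit a simple constraint on $[\bm{r}_1^h,\bm{r}_1^t]$ and $[\bm{r}_2^h,\bm{r}_2^t]$ that guarantees $f_{r_2}\ge f_{r_1}$ pointwise.

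The constraint I would use is that $\bm{r}_2$ is a common coordinate-wise contraction of $\bm{r}_1$ toward the origin: there is $\bm{\alpha}\in\mathbb{R}^d$ with $|\alpha_i|\le 1$ for all $i$ such that $\bm{r}_2^h=\bm{\alpha}\circ\bm{r}_1^h$ and $\bm{r}_2^t=\bm{\alpha}\circ\bm{r}_1^t$. Written in the form the statement advertises, this is the equality (proportionality) constraint $\bm{r}_1^h\circ\bm{r}_2^t=\bm{r}_1^t\circ\bm{r}_2^h$ — so that $r_1^h/r_2^h=r_1^t/r_2^t$, matching Figure~\ref{fig:comp} and the quantity $\bm{r}_1^h\circ\bm{r}_2^t-\bm{r}_1^t\circ\bm{r}_2^h$ discussed in the text — together with the \emph{inequality} constraint $|(\bm{r}_2^h)_i|\le|(\bm{r}_1^h)_i|$ (equivalently $|(\bm{r}_2^t)_i|\le|(\bm{r}_1^t)_i|$) for every $i$. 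Note this leaves $\bm{r}_1^h,\bm{r}_1^t$ entirely free, so the fine-grained information about $r_1$ is unconstrained, exactly as the surrounding discussion claims. The verification is then a coordinate-wise computation: writing $||\bm{h}\circ\bm{r}^h-\bm{t}\circ\bm{r}^t||^2$ as a sum over coordinates, for each $i$ we have $\big(h_i(\bm{r}_2^h)_i-t_i(\bm{r}_2^t)_i\big)^2=\alpha_i^2\,\big(h_i(\bm{r}_1^h)_i-t_i(\bm{r}_1^t)_i\big)^2\le\big(h_i(\bm{r}_1^h)_i-t_i(\bm{r}_1^t)_i\big)^2$ since $\alpha_i^2\le 1$; summing over $i$ gives $||\bm{h}\circ\bm{r}_2^h-\bm{t}\circ\bm{r}_2^t||^2\le||\bm{h}\circ\bm{r}_1^h-\bm{t}\circ\bm{r}_1^t||^2$, i.e. $f_{r_2}(\bm{h},\bm{t})\ge f_{r_1}(\bm{h},\bm{t})$ for all unit-norm $\bm{h},\bm{t}$, which by the first paragraph is the desired encoding. (The limiting case $\bm{\alpha}=\bm{1}$ recovers $r_1=r_2$, the mutual-subrelation situation.)

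The algebra is elementary; the main obstacle is really the \emph{formalization} — stating precisely, and defensibly, what it means for a distance-based model to ``encode'' a rule (the pointwise score inequality evaluated at the maximal-score configuration), and checking that the proposed constraint is honestly only an inequality constraint, with the proportionality part absorbed into the way $\bm{r}_2$ is parameterized relative to $\bm{r}_1$. A secondary point to handle carefully is degenerate coordinates with $(\bm{r}_1^h)_i=0$: there the ratio form is vacuous, but the contraction form $\bm{r}_2^h=\bm{\alpha}\circ\bm{r}_1^h$ still makes sense and that coordinate contributes $0$ to both sides, so the argument is unaffected. Finally, it is worth remarking that this constraint does not conflict with the constructions used for the previous three propositions, so subrelation can be encoded simultaneously with symmetry/antisymmetry, inverse and composition.
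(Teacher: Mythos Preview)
Your proposal is correct and follows essentially the same approach as the paper: impose the coordinate-wise proportionality-plus-contraction constraint $\bm{r}_2^h=\bm{\alpha}\circ\bm{r}_1^h$, $\bm{r}_2^t=\bm{\alpha}\circ\bm{r}_1^t$ with $|\alpha_i|\le 1$ (the paper writes this as $\bm{r}_{2,i}^h/\bm{r}_{1,i}^h=\bm{r}_{2,i}^t/\bm{r}_{1,i}^t=\alpha_i$), then factor $\bm{\alpha}$ out of $\bm{h}\circ\bm{r}_2^h-\bm{t}\circ\bm{r}_2^t$ to obtain $f_{r_2}\ge f_{r_1}$ for all $\bm{h},\bm{t}$. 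Your version is slightly more careful than the paper's in that you spell out the formalization of ``encoding'' as a pointwise score inequality and handle the degenerate $(\bm{r}_1^h)_i=0$ coordinates that the ratio form silently ignores, but the argument is the same.
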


\begin{proof}
Assume a subrelation pair $r_1$ and $r_2$ that $\forall h, t \in \mathcal{E}$: $(h, r_1,t) {\rightarrow} (h, r_2, t)$.
We impose the following constraints:
\begin{equation}
\frac{\bm{r}_{2, i}^H}{\bm{r}_{1,i}^H} = \frac{\bm{r}_{2,i}^T}{\bm{r}_{1,i}^T} = \bm{\alpha}_i, |\bm{\alpha}_i| \leq 1,
\end{equation}
where $\alpha \in \mathbb{R}^d$. Then we can get
\begin{equation}
\resizebox{.995\hsize}{!}{
$
  \begin{aligned}
&f_{r_2}(h, t) - f_{r_1}(h, t) \\
&= || \bm{h} \circ \bm{r}_1^H - \bm{t} \circ \bm{r}_1^T || -  || \bm{h} \circ \bm{r}_2^H - \bm{t} \circ \bm{r}_2^T || \\
&= ||\bm{h} \circ \bm{r}_1^H - \bm{t} \circ \bm{r}_1^T ||- ||\bm{\alpha} \circ (\bm{h} \circ \bm{r}_1^H - \bm{t} \circ \bm{r}_1^T)|| \\
& \geq 0.
  \end{aligned}
  $}
\end{equation}
When the constraints are satisfied, PairRE forces triple $(h, r_2, t)$ to be more plausible than triple $(h, r_1, t)$.
\end{proof}

\textbf{Optimization}. To optimize the model, we utilize the self-adversarial negative sampling loss \cite{sun2019rotate} as objective for training:
\begin{equation}
  \begin{aligned}
L = & -\log{\sigma(\gamma - f_r(\bm{h}, \bm{t}))} \\
 &- \sum_{i=1}^{n}{p(h_{i}^{'}, r, t_{i}^{'})}\log{\sigma(f_r(\bm{h_{i}^{'}}, \bm{t_{i}^{'}}) - \gamma)},
 \end{aligned}
\label{Eq:loss}
\end{equation}
where $\gamma $ is a fixed margin and $\sigma$ is the sigmoid function.
($h_{i}^{'} $, $r $, $t_{i}^{'} $) is the $i^{th}$ negative
triple and $p(h_{i}^{'}, r, t_{i}^{'}) $ represents the weight of this negative
sample. $p(h_{i}^{'}, r, t_{i}^{'}) $ is defined as
follows:
\begin{equation}
  \begin{aligned}
p((h_{i}^{'}, r, t_{i}^{'})|(h, r, t)) = \frac{{\exp{ f_r(h_{i}^{'}, t_{i}^{'})}}}{{\sum_j{\exp{
f_r(h_{j}^{'}, t_{j}^{'})}}}}.
  \end{aligned}
\end{equation}

\section{Experimental results}
\subsection{Experimental setup}
We evaluate the proposed method on link prediction tasks.
At first, we validate the ability to deal with complex relations and symmetry/antisymmetry, inverse and composition relations on four benchmarks.
Then we validate our model on two subrelation specific benchmarks. Statistics of these benchmarks are shown in Table ~\ref{table:dataset}.

\textbf{ogbl-wikikg2}\footnote{ogbl-wikikg2 fixes a bug in test/validation negative samples from original ogbl-wikikg.} \cite{hu2020open} is extracted from Wikidata knowledge base \cite {vrandevcic2014wikidata}. One of the main challenges for this dataset is complex relations. \textbf{ogbl-biokg} \cite{hu2020open} contains data from a large number of biomedical data repositories. One of the main challenges for this dataset is symmetry relations. \textbf{FB15k} \cite{bordes2013translating} contains triples from Freebase.
The main relation patterns are inverse and symmetry/antisymmetry.
\textbf{FB15k-237} \cite{toutanova2015observed} is a subset of FB15k, with inverse relations removed.
The main relation patterns are antisymmetry and composition.
\textbf{DB100k} \cite{ding2018improving} is a subset of DBpedia.
The main relation patterns are composition, inverse and subrelation.
\textbf{Sports} \cite{wang2015knowledge} is a subset of NELL \cite{mitchell2018never}.
The main relation patterns are antisymmetry and subrelation.

\begin{table}[t]
\centering
\resizebox{0.45\textwidth}{!}{
\begin{tabular}{c|c|c|c|c|c}
\hline
\textbf{Dataset} &\bm{$|\mathcal{R}|$} & \bm{$|\mathcal{E}|$} & \textbf{Train} & \textbf{Valid} & \textbf{Test} \\ \hline
ogbl-wikikg2 &535 &2,500k  &16,109k &429k &598k \\ \hline
ogbl-biokg  &51 &94k  &4,763k &163k &163k \\ \hline
FB15k &13k &15k &483k &50k &59k  \\ \hline
FB15k-237 &237 &15k &272k &18k &20k \\ \hline
DB100k &470 &100k  &598k &50k &50k \\ \hline
Sports &4  &1039 &1312 &- &307 \\\hline
\end{tabular}
}
\label{table:datasets}
\caption{\label{table:dataset} Number of entities, relations, and observed triples in each split for the six benchmarks.}
\end{table}

\begin{table*}[h]
\begin{center}
\resizebox{0.78\textwidth}{!}{
\begin{tabular}{c|ccc|ccc}
\hline
- &\multicolumn{3}{c|}{ogbl-wikikg2} & \multicolumn{3}{c}{ogbl-biokg} \\ \hline
\textbf{Model} &$\#$Dim &Test MRR &Valid MRR &$\#$Dim &Test MRR &Valid MRR  \\ \hline
TransE &100 &$0.2622\pm0.0045$ &$0.2465\pm0.0020$ &- &- &- \\
DistMult &100 &$0.3447\pm0.0082$ &$0.3150\pm0.0088$ &- &- &- \\
ComplEx &50 &$0.3804\pm0.0022$ &$0.3534\pm0.0052$ &- &- &-  \\
RotatE &50 &$0.2530\pm0.0034$  &$0.2250\pm0.0035$ &- &- &-	\\ \hline
PairRE &100 &$\textbf{0.4849}\pm0.0029$ &$\textbf{0.4941}\pm0.0035$ &- &- &- \\  \Xhline{1.2pt}
TransE &500$\dagger$ &$0.4256\pm0.0030$ &$0.4272\pm0.0030$ &2000 &$0.7452\pm0.0004$	&$0.7456\pm0.0003$ \\
DistMult &500$\dagger$  &$0.3729\pm0.0045$	&$0.3506\pm0.0042$ &2000 &$0.8043\pm0.0003$	&$0.8055\pm0.0003$ \\
ComplEx &250$\dagger$  &$0.4027\pm0.0027$ &$0.3759\pm0.0016$ &1000 &$0.8095\pm0.0007$	&$0.8105\pm0.0001$ \\
RotatE &250$\dagger$  &$0.4332\pm0.0025$	&$0.4353\pm0.0028$ &1000 &$0.7989\pm0.0004$	&$0.7997\pm0.0002$ \\ \hline
PairRE &200 &$\textbf{0.5208}\pm0.0027$ &$\textbf{0.5423}\pm0.0020$ &2000  &$\textbf{0.8164}\pm0.0005$ &$\textbf{0.8172}\pm0.0005$ \\ \hline
\end{tabular}
}
\end{center}
\caption{\label{table:ogbl} Link prediction results on ogbl-wikikg2 and ogbl-biokg.
Best results are in bold. All the results except PairRE are from \cite{hu2020open}.
$\dagger$ requires a GPU with 48GB memory.
PairRE runs on a GPU with 16GB memory.}
\end{table*}

\begin{table*}[h]
\begin{center}
\resizebox{0.85\textwidth}{!}{
\begin{tabular}{c|ccccc|ccccc}
\hline
- &\multicolumn{5}{c|}{FB15k} & \multicolumn{5}{c}{FB15k-237} \\ \hline
\textbf{Model} & MR & MRR &Hit@10 &Hit@3 &Hit@1 &MR &MRR &Hit@10 &Hit@3 &Hit@1 \\ \hline
TransE$\dag$ &- &0.463 &0.749 &0.578 &0.297 &357 &0.294 &0.465 &- &-    \\
DistMult$\Diamond$ &42 &0.798 &0.893 &- &-  &254 &0.241 &0.419 &0.263 &0.155   \\
HolE   &- &0.524 &0.739 &0.759 &0.599 &- &- &- &- &-                \\
ConvE &51 &0.657 &0.831 &0.723 &0.558 &244 &0.325 &0.501 &0.356 &0.237          \\
ComplEx &- &0.692 &0.840 &0.759 &0.599 &339 &0.247 &0.428 &0.275 &0.158        \\
SimplE &- &0.727 &0.838 &0.773 &0.660 &- &- &- &- &-         \\
RotatE &40 &0.797 &0.884 &0.830 &0.746 &177 &0.338 &0.533 &0.375 &0.241  \\
SeeK &- &\textbf{0.825} &0.886 &0.841 &\textbf{0.792} &- &- &- &- &- \\
OTE &- &- &- &- &- &- &0.351 &0.537 &0.388 &0.258 \\
GC-OTE  &- &- &- &- &- &- &\textbf{0.361} &\textbf{0.550} &\textbf{0.396} &\textbf{0.267} \\ \hline
PairRE &\textbf{37.7} &0.811 &\textbf{0.896} &\textbf{0.845} &0.765  &$\bm{160}$  &0.351  &$0.544$ &$0.387$  &$0.256$ \\
  &$\pm0.4979$ &$\pm0.00077$ &$\pm0.00071$  &$\pm0.0011$ &$\pm0.0012$   &$\pm0.9949$  &$\pm0.00066$ &$\pm0.00093$
&$\pm0.00079$ &$\pm0.00097$
 \\  \hline
\end{tabular}
}
\end{center}
\caption{\label{table:FB15k} Link prediction results on FB15k and FB15k-237. Results of $[\dag]$ are taken from \cite{nickel2016holographic}; Results of $[\Diamond]$ are taken from \cite{kadlec2017knowledge}. Other results are taken from the corresponding papers. GC-OTE adds graph context to OTE \cite{tang2019orthogonal}.}
\end{table*}

\textbf{Evaluation protocol}. Following the state-of-the-art methods, we measure the quality of the ranking of each test triple among all possible head entity and tail entity substitutions: ($h^{'} $, $r $ , $t $) and ($h $, $r $, $t^{'} $), $\forall h^{'} $, $\forall t^{'} \in \mathcal{E} $.
Three evaluation metrics, including Mean Rank(MR), Mean Reciprocal Rank (MRR) and Hit ratio with cut-off values $n$ = 1, 3, 10, are utilized.
MR measures the average rank of all correct entities.
MRR is the average inverse rank for correct entities with higher value representing better performance.
Hit@$n$ measures the percentage of correct entities in the top $n$ predictions.
The rankings of triples are computed after removing all the other observed triples that appear in either training, validation or test set. For experiments on ogbl-wikikg2 and ogbl-biokg, we follow the evaluation protocol of these two benchmarks \cite{hu2020open}.

\textbf{Implementation}.
We utilize the official implementations of benchmarks ogbl-wikikg2 and ogbl-biokg \cite{hu2020open} for the corresponding experiments\footnote{Our code is available at: \tiny{https://github.com/alipay/KnowledgeGraphEmbeddingsViaPairedRelationVectors\_PairRE}}.
Only the hypeparameter $\gamma$ and embedding dimension are tuned.
The other settings are kept the same with baselines.
For the rest experiments, we implement our models based on the implementation of RotatE \cite{sun2019rotate}.
All hypeparameters except $\gamma$ and embedding dimension are kept the same with RotatE.

\begin{table}[t]
\centering
\resizebox{0.45\textwidth}{!}{
\begin{tabular}{c}
\hline
Subrelation \\ \hline
(h, CoachesTeam, t) $\rightarrow$ (h, PersonBelongsToOrganization, t) \\
(h, AthleteLedSportsTeam, t) $\rightarrow$ (h, AtheletePlaysForTeam, t) \\ \hline
\end{tabular}
}
\caption{\label{table:sports_subrelation}  
The added subrelation rules for Sports dataset.}
\end{table}

\begin{table}[t]
\centering
\resizebox{0.35\textwidth}{!}{
\begin{tabular}{c|c|c}
\hline
Model  &MRR   &hit@1 \\ \hline
SimplE &0.230 &0.184 \\ 
SimplE$^+$   &0.404  &0.349 \\ \hline
PairRE  &0.468  $\pm $ 0.003  &0.416 $\pm $ 0.005  \\
PairRE+Rule &$\textbf{0.475}$  $\pm $ 0.003  &$\textbf{0.432}$ $\pm $ 0.004 \\ \hline
\end{tabular}
}
\caption{\label{table:sports_weight_tying} Link prediction results on Sports dataset.
Other results are taken from \cite{fatemi2019improved}.}
\end{table}

\begin{table}[h]
\centering
\resizebox{0.45\textwidth}{!}{
\begin{tabular}{c|cccc}
\hline
\textbf{Model} &MRR &Hit@10 &Hit@3 &Hit@1  \\ \hline
TransE &0.111 &0.270 &0.164 &0.016 \\
DistMult &0.233 &0.448 &0.301 &0.115 \\
HolE &0.260 &0.411 &0.309 &0.182 \\
ComplEx &0.242 &0.440 & 0.312 &0.126 \\
SeeK &0.338 &0.467 &0.370 & 0.268 \\ \hline
ComplEx-NNE &0.298 &0.426 &0.330 &0.229 \\
ComplEx-NNE-AER &0.306 &0.418 &0.334 &0.244 \\ \hline
PairRE &0.412 &$\textbf{0.600}$ &0.472 &0.309 \\
 &$\pm0.0015$ &$\pm0.0006$ &$\pm0.0015$ &$\pm0.0027$ \\ \hline
PairRE+rule &$\textbf{0.419}$ &0.599 &$\textbf{0.475}$ &$\textbf{0.321}$ \\
 &$\pm0.0010$ &$\pm0.0008$ &$\pm0.0008$ &$\pm0.0016$ \\ \hline
\end{tabular}
}
\caption{\label{table:DB100k} Link prediction results on DB100k. All the results are taken from the corresponding papers.}
\end{table}

\subsection{Main results}

Comparisons for ogbl-wikikg2 and ogbl-biokg are shown in Table ~\ref{table:ogbl}. On these two large scale datasets, PairRE achieves state-of-the-art performances.
For ogbl-wikikg2 dataset, PairRE performs best on both limited embedding dimension and increased embedding dimension.
With the same number of parameters to ComplEx (dimension 100), PairRE improves Test MRR close to 10\%.
With increased dimension, all models are able to achieve higher MRR on validation and test sets.
Due to the limitation of hardware, we only increase embedding dimension to 200 for PairRE.
PairRE also outperforms all baselines and improves Test MRR 8.7\%.
\textbf{Based on performances of baselines, the performance of PairRE may be improved further if embedding dimension is increased to 500}.
Under the same experiment setting and the same number of parameters, PairRE also outperforms all baselines on ogbl-biokg  dataset.
It improves Test MRR by 0.69\%, which proves the superior ability to encode symmetry relations.

Comparisons for FB15k and FB15k-237 datasets are shown in Table ~\ref{table:FB15k}.
Since our model shares the same hyper-parameter settings and implementation with RotatE, comparing with this state-of-the-art model is fair to show the advantage and disadvantage of the proposed model.
Besides, the comparisons also include several leading methods, such as TransE \cite{bordes2013translating}, DistMult \cite{yang2014embedding}, HolE \cite{nickel2016holographic}, ConvE \cite{dettmers2018convolutional}, ComplEx \cite{trouillon2016complex}, SimplE \cite{kazemi2018simple}, SeeK \cite{xu2020seek} and OTE \cite{tang2019orthogonal}.
Compared with RotatE, PairRE shows clear improvements on FB15k and FB15k-237 for all evaluation metrics.
For MRR metric, the improvements are 1.4\% and 1.3\% respectively.
Compared with the other leading methods, PairRE also shows highly competitive performances.
All these comparisons prove the effectiveness of PairRE to encode inverse and composition relations.

\subsection{Further experiments on subrelation}

We further compare our method with two of the leading methods ComplEx-NNE-AER and SimplE$^+$, which focus on encoding subrelation.
These two methods add subrelation rules to semantic matching models.
We utilize these rules as constraints on relation representations for PairRE.
Two ways are validated.
We first test the performance of weight tying for subrelation rules on Sports dataset.
The rules ($r_1 {\longrightarrow} r_2 $) are added as follows:
\begin{equation}
  \begin{aligned}
	&\bm{r}_{2}^H = \bm{r}_{1}^H \circ cosine(\bm{\theta}),
	\\&\bm{r}_{2}^T = \bm{r}_{1}^T \circ cosine(\bm{\theta}),
  \end{aligned}
\end{equation}
where $\bm{\theta} \in \mathbb{R}^d$.
The added rules are shown in Table ~\ref{table:sports_subrelation}.
The experiments results in Table ~\ref{table:sports_weight_tying} show effectiveness of the proposed method.

Weight tying on relation representation is a way to incorporate hard rules.
The soft rules can also be incorporated into PairRE by approximate entailment constraints on relation representations.
In this section, we add the same rules from ComplEx-NNE-AER, which includes subrelation and inverse rules.
We denote by $r_1 \stackrel{\lambda}{\longrightarrow} r_2 $ the approximate entailment between relations $r_1$ and $r_2$, with confidence level $\lambda$.
The objective for training is then changed to:
\begin{equation}
\resizebox{.9\hsize}{!}{
$
  \begin{aligned}
L_{rule} & =  L  +  \mu \sum_{\tau_{subrelation}}{\lambda \bm{1}^T(\bm{r}_1^H\circ\bm{r}_2^T - \bm{r}_1^T\circ\bm{r}_2^H )^2} \\& + \mu \sum_{\tau_{inverse}}{\lambda \bm{1}^T(\bm{r}_1^H\circ\bm{r}_2^H - \bm{r}_1^T\circ\bm{r}_2^T )^2},
 \end{aligned}
 $}
\label{Eq:rule_loss}
\end{equation}
where $L$ is calculated from Equation ~\ref{Eq:loss}, $\mu$ is loss weight for added constraints, $\tau_{subrelation}$ and $\tau_{inverse}$ are the sets of subrelation rules and inverse rules respectively.
Following \cite{ding2018improving}, we take the corresponding two relations from subrelation rules as equivalence.
Because $\tau_{subrelation}$ contains both rule $r_1 {\rightarrow} r_2 $ and rule $r_2 {\rightarrow} r_1$.

We validate our method on DB100k dataset. The results are shown in Table ~\ref{table:DB100k}.
We can see PairRE outperforms the recent state-of-the-art SeeK and ComplEx based models with large margins on all evaluation metrics.
With added constraints, the performance of PairRE is improved further.
The improvements for the added rules are 0.7\%, 1.2\% for MRR and Hit@1 metrics respectively.

\begin{figure}[b]
    \centering
    \includegraphics[width=0.69\linewidth]{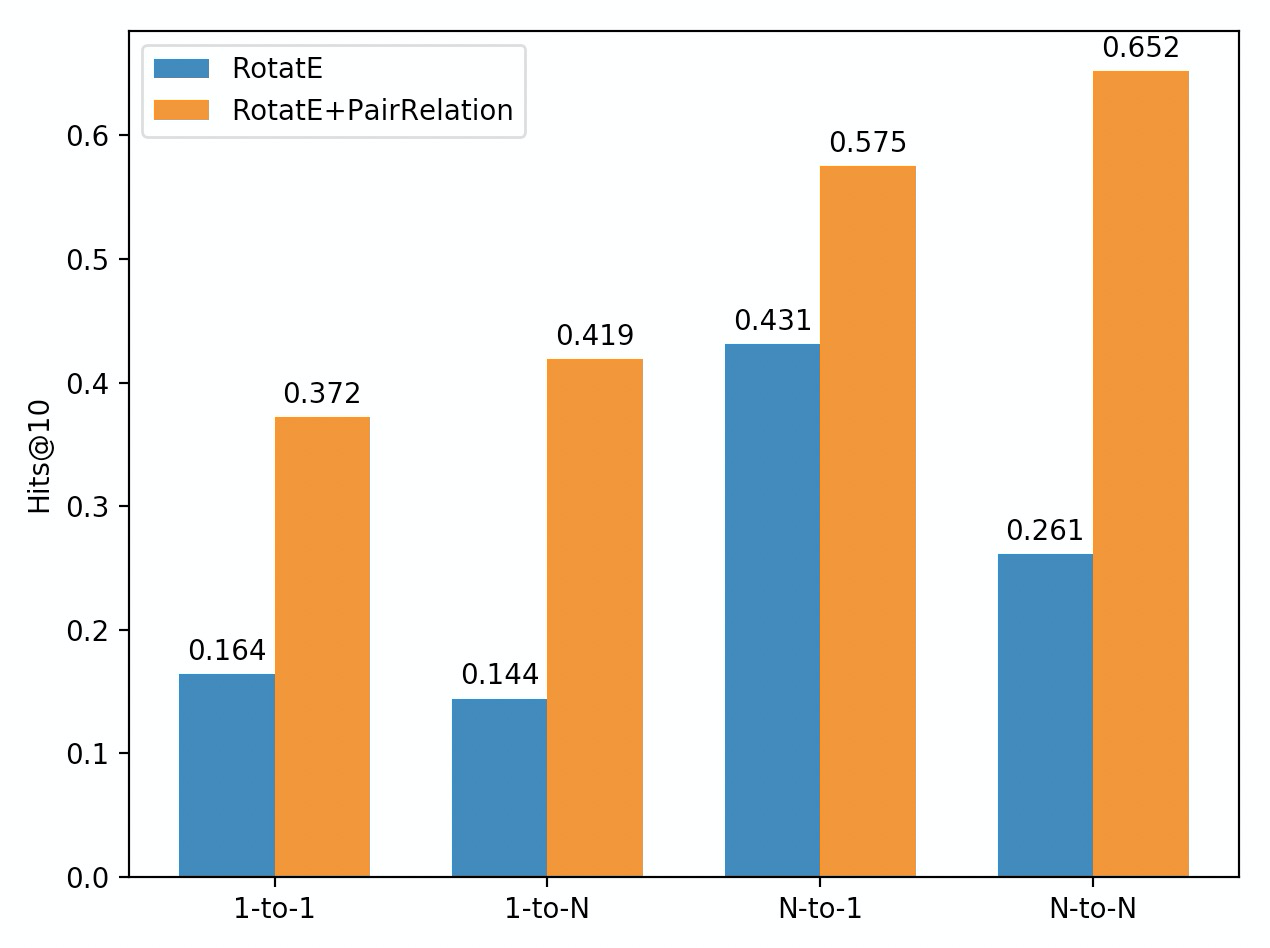}
\caption{Performance comparison between RotatE and RotatE+PairRelation on ogbl-wikikg2 dataset.}
\label{fig:r}
\end{figure}

\begin{table*}[h]
\begin{center}
\resizebox{0.7\textwidth}{!}{
\begin{tabular}{c|cccc|cccc}
\hline
- &\multicolumn{4}{c|}{FB15k(Hits@10)} & \multicolumn{4}{c}{ogbl-wikikg2(Hits@10)} \\ \hline
\textbf{Model} &1-to-1 &1-to-N &N-to-1 &N-to-N &1-to-1 &1-to-N &N-to-1 &N-to-N \\ \hline
KGE2E\_KL\cite{he2015learning} &0.925	&0.813	&$0.802$	&0.715 &- &- &- &- \\
TransE &0.887	&0.822	&0.766	&0.895    &0.074	&0.063	&0.400	&0.220 \\
ComplEx &$\textbf{0.939}$	&0.896	&0.822	&0.902 &$\textbf{0.394}$	&$\textbf{0.278}$	&0.483	&0.504 \\
RotatE &0.923	&0.840	&0.782	&0.908 &0.164	&0.144	&0.431	&0.261 \\ \hline
PairRE &0.785	&$\textbf{0.899}$	&$\textbf{0.872}$	&$\textbf{0.940}$ &0.262	&0.270	&$\textbf{0.594}$	&$\textbf{0.587}$ \\ \hline
\end{tabular}
}
\end{center}
\caption{\label{table:FB15k&&wikikg} Experimental results on FB15k and ogbl-wikikg2 by relation category. Results on FB15k are taken from RotatE \cite{sun2019rotate}.
The embedding dimensions for models on ogbl-wikikg2 are same to the experiments in Table \ref{table:ogbl}, which is 100 for real space models and 50 for complex value based models.}
\end{table*}

\begin{figure*}[!h]
  \begin{subfigure}{.23\textwidth}
    \centering
    \includegraphics[width=1.1\linewidth]{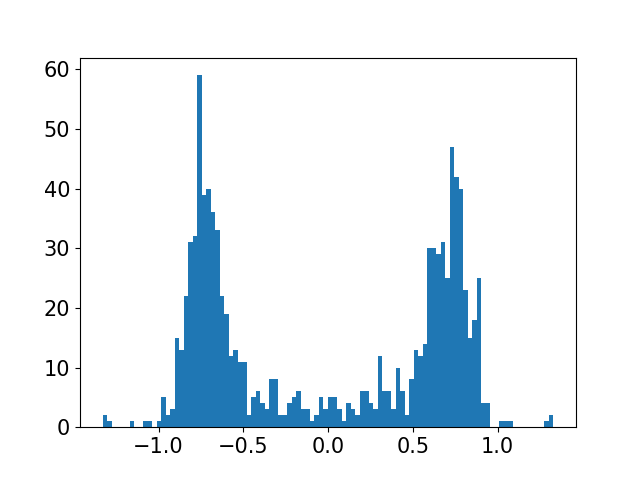}
    \caption{$r_1$}
    \label{fig:embedding:a}
  \end{subfigure}
  \begin{subfigure}{.23\textwidth}
    \centering
    \includegraphics[width=1.1\linewidth]{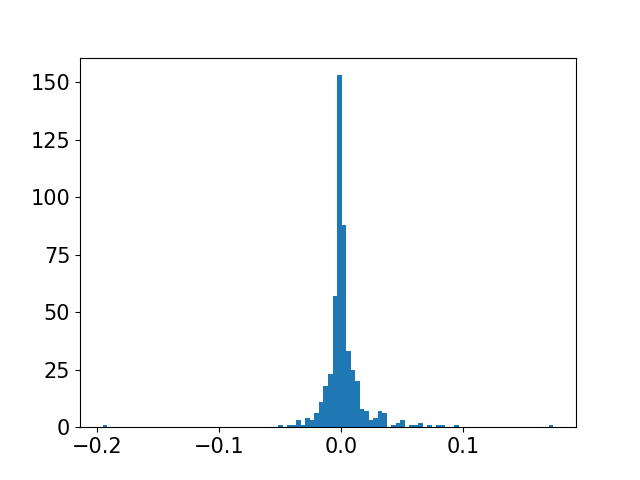}
    \caption{${\bm{r}_{1}^H}^2 - {\bm{r}_{1}^T}^2$}
    \label{fig:embedding:b}
   \end{subfigure}
   \begin{subfigure}{.23\textwidth}
    \centering
    \includegraphics[width=1.1\linewidth]{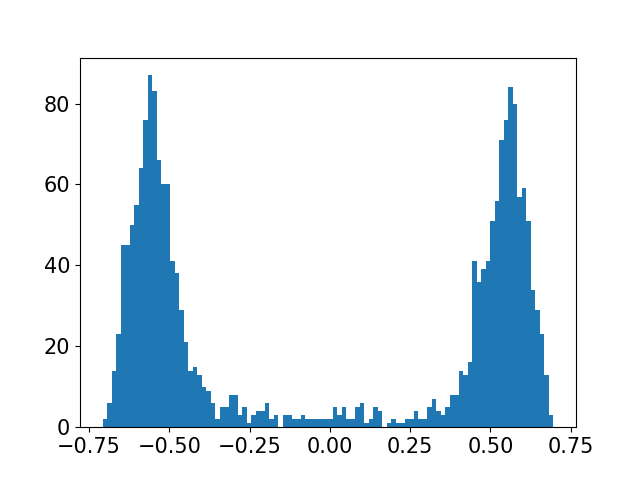}
    \caption{$r_2$}
    \label{fig:embedding:c}
  \end{subfigure}
  \begin{subfigure}{.23\textwidth}
    \centering
    \includegraphics[width=1.1\linewidth]{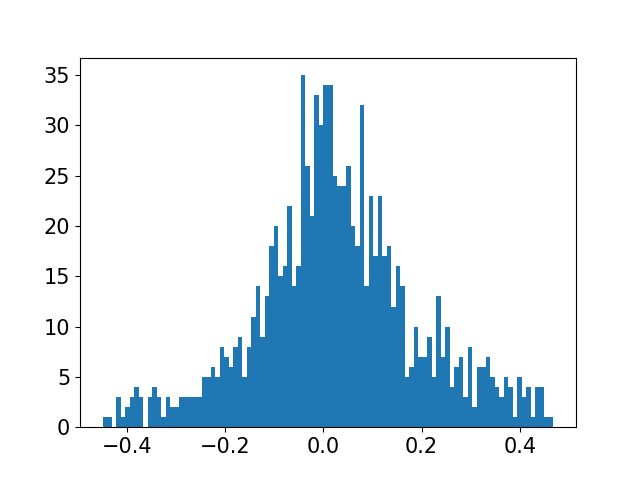}
    \caption{${\bm{r}_{2}^H}^2 - {\bm{r}_{2}^T}^2$}
    \label{fig:embedding:d}
  \end{subfigure}
  
   \begin{subfigure}{.23\textwidth}
    \centering
    \includegraphics[width=1.1\linewidth]{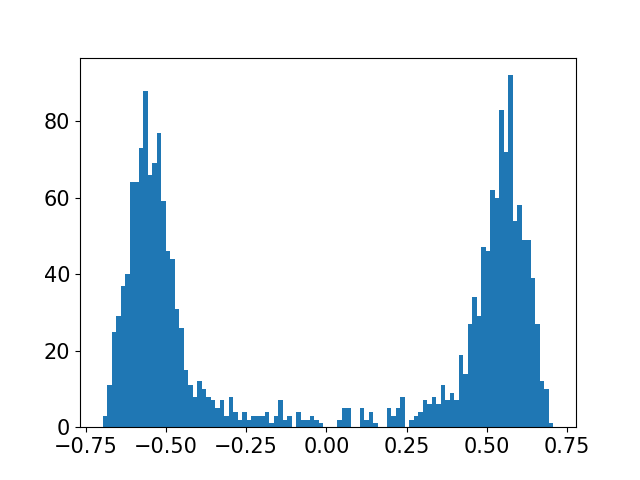}
    \caption{$r_3$}
    \label{fig:embedding:e}
  \end{subfigure}
    \begin{subfigure}{.23\textwidth}
    \centering
    \includegraphics[width=1.1\linewidth]{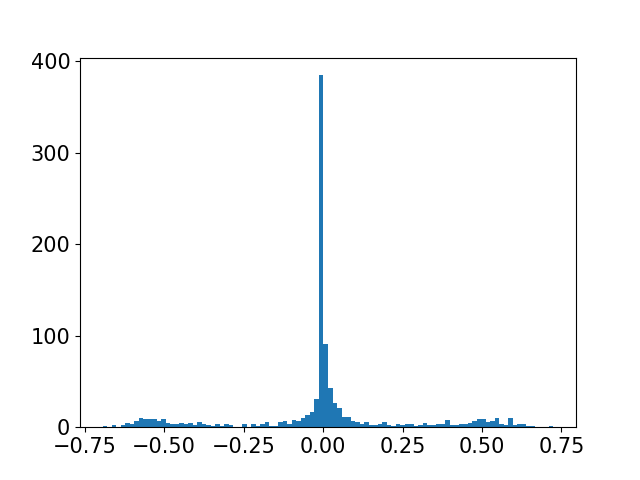}
    \caption{${\bm{r}_{2}^H\circ\bm{r}_{3}^H} - {\bm{r}_{2}^T\circ\bm{r}_{3}^T}$}
    \label{fig:embedding:f}
  \end{subfigure}
  \begin{subfigure}{.23\textwidth}
    \centering
    \includegraphics[width=1.1\linewidth]{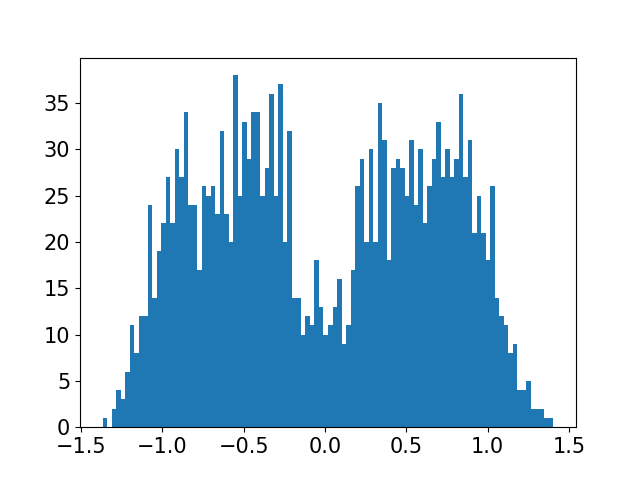}
    \caption{$r_4$}
    \label{fig:embedding:g}
  \end{subfigure}
  \begin{subfigure}{.23\textwidth}
    \centering
    \includegraphics[width=1.1\linewidth]{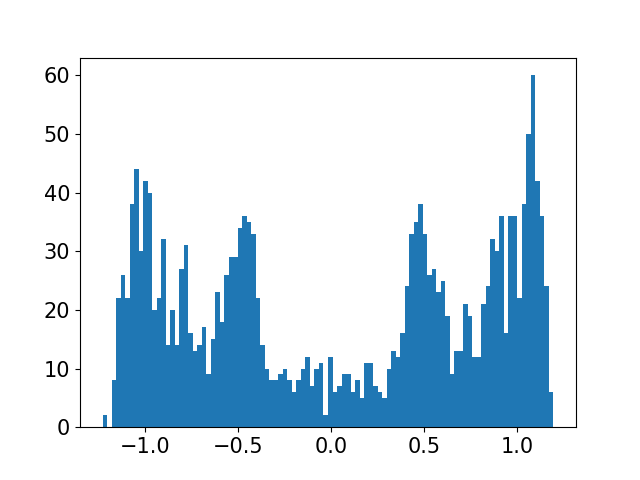}
    \caption{$r_5$}
    \label{fig:embedding:h}
   \end{subfigure}
   
    \begin{subfigure}{.23\textwidth}
    \centering
    \includegraphics[width=1.1\linewidth]{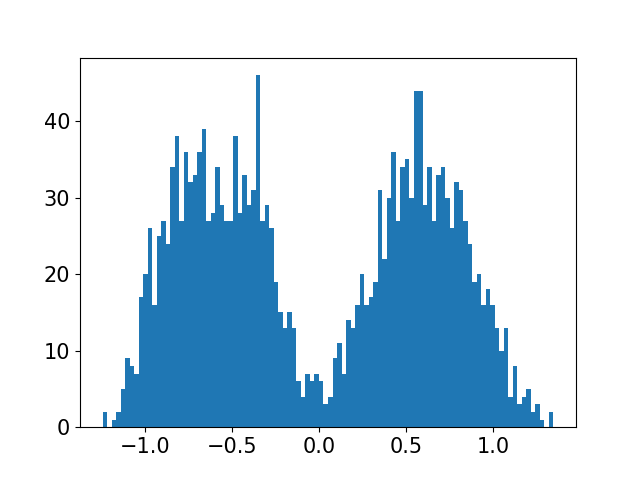}
    \caption{$r_6$}
    \label{fig:embedding:i}
  \end{subfigure}
  \hspace{75mm}
    \begin{subfigure}{.23\textwidth}
    \centering
    \includegraphics[width=1.1\linewidth]{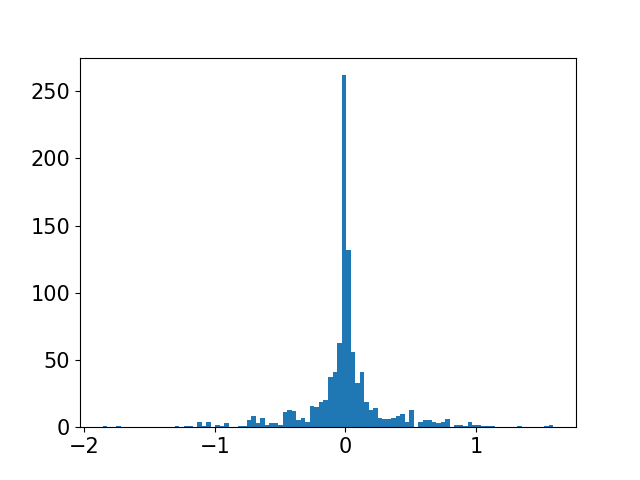}
    \caption{\scriptsize$\bm{r}_4^H\circ\bm{r}_5^H\circ\bm{r}_6^T - \bm{r}_4^T\circ\bm{r}_5^T\circ\bm{r}_6^H $}
    \label{fig:embedding:j}
  \end{subfigure}
\caption{Histograms of relation embeddings for different relation patterns. $r_1$ is relation $spouse$.
$r_2$ is relation $/broadcast/tv\_station/owner$. $r_3$ is relation $/broadcast/tv\_station\_owner/tv\_stations$.
$r_4$ is relation $/location/administrative\_division/capital/location/administrative\_division\-\_capital\_relationship/capital$.
$r_5$ is relation $/location/hud\_county\_place/place$. $r_6$ is relation $base/areas/schema/administrative\_area/capital$.}
\label{fig:symm}
\end{figure*}

\subsection{Model analysis}

\subsection*{Analysis on complex relations} 

We analyze the performances of PairRE for complex relations.
The results of PairRE on different relation categories on FB15k and ogbl-wikikg2 are summarized into Table \ref{table:FB15k&&wikikg}.
We can see PairRE performs quite well on N-to-N and N-to-1 relations. It has a significant lead over baselines. We also notice that performance of 1-to-N relations on ogbl-wikikg2 dataset is not as strong as the other relation categories. One of the reasons is that only 2.2\% of test triples belong to the 1-to-N relation category.

In order to further test the performance of paired relation vectors, we change the relation vector in RotatE to paired vectors.
In the modified RotatE model, both head and tail entities are rotated with different angles based on the paired relation vectors.  This model can also be seen as complex value based PairRE. We name this model as RotatE+PairRelation. The experiment results are shown in Figure ~\ref{fig:r}.
With the same embedding dimension (50 in the experiments),  RotatE+PairRelation improves performance of RotatE with 20.8\%, 27.5\%, 14.4\% and 39.1\% on 1-to-1, 1-to-N, N-to-1 and N-to-N relation categories respectively. These significant improvements prove the superior ability of  paired relation vectors to handle complex relations.

\subsection*{Analysis on relation patterns}

To further verify the learned relation patterns, we visualize some examples.
Histograms of the learned relation embeddings are shown in Figure ~\ref{fig:symm} .

\textbf{Symmetry/AntiSymmetry}. Figure ~\ref{fig:embedding:a} shows a symmetry relation $spouse$ from DB100k.
The embedding dimension is 500.
For PairRE, symmetry relation pattern can be encoded when embedding $\bm{r}$ satisfies ${\bm{r}^H}^2 = {\bm{r}^T}^2$.
Figure ~\ref{fig:embedding:b} shows most of the paired elements in $\bm{r}^H$ and $\bm{r}^T$ have the same absolute value.
Figure ~\ref{fig:embedding:c} shows a antisymmetry relation $tv\_station\_owner$, where most of the paired elements do not have the same absolute value as shown in Figure ~\ref{fig:embedding:d}.

\textbf{Inverse}. Figure ~\ref{fig:embedding:c} and Figure ~\ref{fig:embedding:e} show an example of inverse relations from FB15k.
As the histogram in Figure ~\ref{fig:embedding:f} shows these two inverse relations $tv\_station\_owner$ ($r_2$) and $tv\_station\_owner\_tv\_stations$ ($r_3$) close to satisfy $\bm{r}_3^H \circ \bm{r}_2^H = \bm{r}_3^T \circ \bm{r}_2^T$.

\textbf{Composition}. Figures ~\ref{fig:embedding:g},~\ref{fig:embedding:h}, ~\ref{fig:embedding:i} show an example of composition relation pattern from FB15k, where the third relation $r_6$ can be seen as the composition of the first relation $r_4$ and the second relation $r_5$.
As Figure ~\ref{fig:embedding:j} shows these three relations close to
satisfy $\bm{r}_4^H\circ\bm{r}_5^H\circ\bm{r}_6^T - \bm{r}_4^T\circ\bm{r}_5^T\circ\bm{r}_6^H $.

\section{Conclusion}
To better handle complex relations and tackle more relation patterns, we proposed PairRE, which represents each relation with paired vectors.
With a slight increase in complexity, PairRE can solve the aforementioned two problems efficiently.
Beyond the symmetry/antisymmetry, inverse and composition relations, PairRE can further encode subrelation with simple constraint on relation representations.
On large scale benchmark ogbl-wikikg2 an ogbl-biokg, PairRE outperforms all the state-of-the-art baselines. 
Experiments on other well designed benchmarks also demonstrate the effectiveness of the focused key abilities.

\bibliographystyle{acl_natbib}
\bibliography{anthology,acl2021}

\end{document}